\def\eqref#1{equation~\ref{#1}}
\def\1{\bm{1}}
\DeclareMathAlphabet{\mathsfit}{\encodingdefault}{\sfdefault}{m}{sl}
\SetMathAlphabet{\mathsfit}{bold}{\encodingdefault}{\sfdefault}{bx}{n}
\title{Pruning by Active Attention Manipulation}
\author{Zahra Babaiee  \\
Computer Science Department \\
Technical University of Vienna \\
\texttt{zahra.babiee@tuwien.ac.at} \\
\And
Lucas Liebenwein  \\
EE and CS Department\\
MIT \\
\texttt{lucas@mit.edu} \\
\And 
Ramin Hasani  \\
EE and CS Department\\
MIT \\
\texttt{rhasani@mit.edu} \\
\And
Daniela Rus  \\
EE and CS Department\\
MIT \\
\texttt{rus@mit.edu} \\
\And
Radu Grosu \\
Computer Science Department \\
Technical University of Vienna \\
\texttt{radu.grosu@tuwien.ac.at} 
}
\begin{document}

\maketitle

\begin{abstract}
Filter pruning of a CNN is typically achieved by applying discrete masks on the CNN's filter weights or activation maps, post-training. Here, we present a new filter-importance-scoring concept named pruning by active attention manipulation (PAAM), that sparsifies the CNN's set of filters through a particular attention mechanism, during-training. PAAM learns analog filter scores from the filter weights by optimizing a cost function regularized by an additive term in the scores. As the filters are not independent, we use attention to dynamically learn their correlations. Moreover, by training the pruning scores of all layers simultaneously, PAAM can account for layer inter-dependencies, which is essential to finding a performant sparse sub-network. PAAM can also train and generate a pruned network from scratch in a straightforward, one-stage training process without requiring a pre-trained network. Finally, PAAM does not need layer-specific hyperparameters and pre-defined layer budgets, since it can implicitly determine the appropriate number of filters in each layer. Our experimental results on different network architectures suggest that PAAM outperforms state-of-the-art structured-pruning methods (SOTA). On CIFAR-10 dataset, without requiring a pre-trained baseline network, we obtain 1.02\% and 1.19\% accuracy gain and 52.3\% and 54\% parameters reduction, on ResNet56 and ResNet110, respectively. Similarly, on the ImageNet dataset, PAAM achieves $1.06\%$ accuracy gain while pruning $51.1\%$ of the parameters on ResNet50. For Cifar-10, this is better than the SOTA with a margin of 9.5\% and 6.6\%, respectively, and on ImageNet with a margin of 11\%.
\end{abstract}

\begin{wrapfigure}[11]{r}{0.5\textwidth}
\vspace{-11mm}
    \centering
    \includegraphics[scale=1.1]{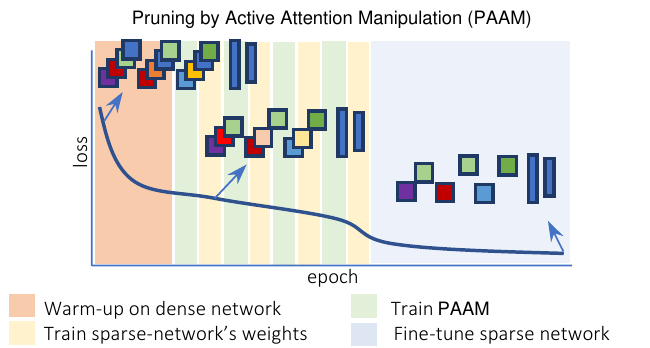}
    \vspace{-5mm}
    \caption{Sensitivity-based filter pruning schedule.}
    \label{fig:fig1}
\end{wrapfigure}
\section{Introduction}
Convolutional Neural Networks (CNNs)~\cite{doi:10.1162/neco.1989.1.4.541} are used nowadays in a wide variety of computer-vision tasks. Large CNNs in particular, achieve considerable performance levels, but with significant computation, memory, and energy footprints, respectively~\cite{sui2021chip}. 
As a consequence, they cannot be effectively employed in resource-limited environments such as mobile or embedded devices. It is therefore essential to create smaller models, that can perform well without significantly sacrificing their accuracy and performance. This goal can be accomplished by either designing smaller, but performant, network architectures \cite{lechner2020neural, pmlr-v97-tan19a} or by first training an over-parameterized network, and sparsifying it thereafter, by pruning its redundant parameters \cite{han2016deep,liebenwein2020provable,liebenwein2021sparse}. Neural-network pruning is defined as systematically removing parameters from an existing neural network~\cite{hoefler2021sparsity}. It is a popular technique to reduce growing energy and performance costs and to support deployment in resource-constrained environments such as smart devices. Various pruning approaches have been developed, and this has gained considerable attention over the past few years \cite{zhu2017prune, sui2021chip,liebenwein2021sparse,peste2021ac,frantar2021efficient,deng2020model}.

Pruning methods are categorized into either unstructured or structured. The first, remove individual weight-parameters, only~\cite{han2016deep}. The second, remove entire groups, by pruning neurons, filters, or channels, respectively~\cite{10.1145/3005348, li2019learning, he2018soft, liebenwein2020provable}. As modern hardware is tuned towards dense computations, structured pruning offers a more favorable balance between accuracy and performance~\cite{hoefler2021sparsity}. A very prominent family of structured-pruning methods is filter pruning. Choosing which filters to remove according to a carefully chosen importance metric (or filter score) is an essential part of any method in this family. 

Data-free methods solely rely on the value of weights, or the network structure, in order to determine the importance of filters. Magnitude pruning, for example, is one of the simplest and most common of such methods. It prunes the filters that have the smallest weight-values in the $l1$ norm. Data-informed methods focus on the feature maps generated from the training data (or a subset of samples) rather than the filters alone. These methods vary from sensitivity-based approaches (which consider the statistical sensitivity of the output feature maps with regard to the input data~\cite{Malik:816737, liebenwein2020provable}), to correlation-based methods (with an inter-channel perspective, to keep the least similar (or least correlated) feature maps~\cite{sun2015sparsifying, sui2021chip}).

\begin{figure*}[t]
\centering
   \includegraphics[width=0.92\textwidth]{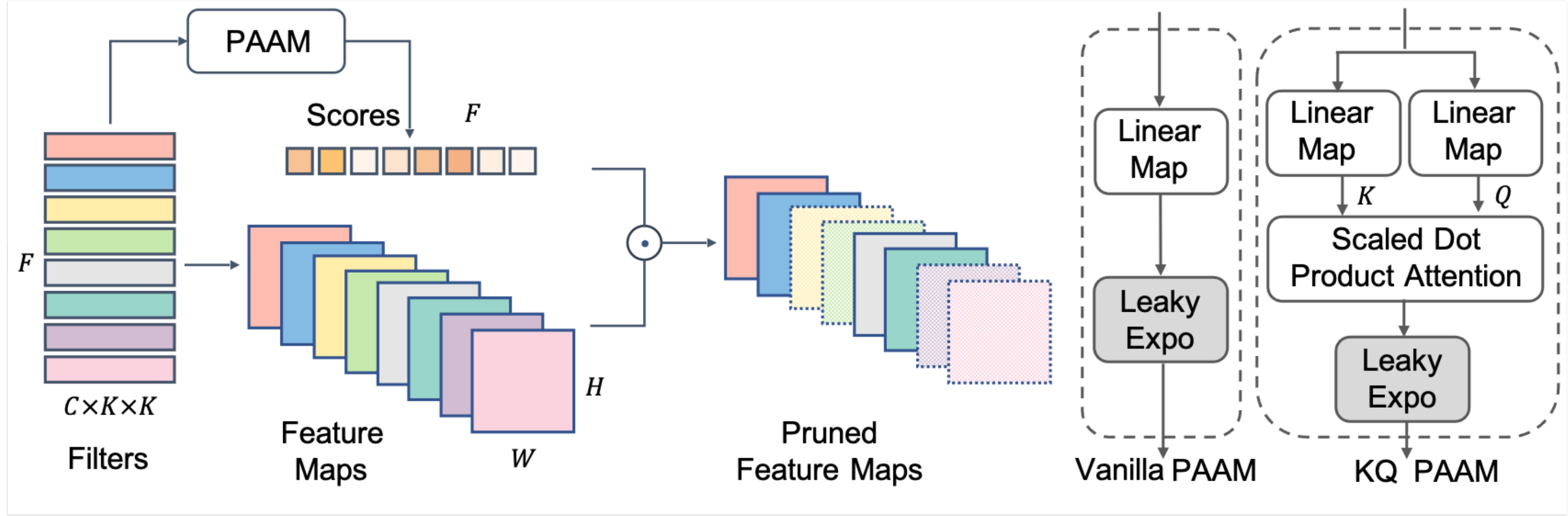}
   \vspace*{-4mm}
   \caption{PAAM learns the importance scores of the filters from the filter weights.}
   \label{fig:att_pruner}
   \vspace*{-5mm}
\end{figure*}

Both data-free and data-informed methods generally determine the importance of a filter in a given layer, locally. However, filter-importance is a global property, as it changes relative to the selection of the filters in the previous and next layers.
Moreover,  determining the optimal filter-budget for each layer (a vital element of any pruning method),
is also a challenge, that all local, importance-metric methods face. The most trivial way to overcome these challenges, is to evaluate the network loss with and without each combination of $k$ candidate filters out of $N$. However, this approach would require the evaluation of ${N\choose k}$ subnetworks, which is in practice impossible to achieve. 

Training-aware pruning methods aim to learn binary masks for turning on and off each filter. A regularization metric often accompanies them, with a penalty guiding the masks to the desired budget. Mask learning, simultaneously for all filters, is an effective method for identifying a globally optimal subset of the network. However, due to the discrete nature of the filters and binary masks, the optimization problem is generally non-convex and NP-hard. A simple trick of many recent works~\cite{DMC2020,NPPM2021,DBLP:journals/corr/abs-2112-03406} is to use straight-through estimators~\cite{STEBengio} to calculate the derivatives, by considering binary functions as identities. While ingenious, this precludes learning the relative importance of filters among each other. Even more importantly, the on-off bits within the masks are assumed to be independent, which is a gross oversimplification. 

This paper solves the above problems by introducing PAAM, a novel end-to-end pruning method, by active attention manipulation. PAAM also employs an $l1$ regularization technique, encouraging filter-score decrease. However, PAAM scores are analog, and multiply the activation maps during score training. Moreover, a proper score spread is ensured through a specialized activation function. This allows PAAM to learn the relative importance of filters globally, through gradient descent. Moreover, the scores are not considered independent, and their hidden correlations are learned in a scalable fashion, by employing an attention mechanism specifically tuned for filter scores.  Given a global pruning budget, PAAM finds the optimal pruning threshold from the cumulative histogram of filter scores, and keeps only the filters within the budget. This relieves PAAM from having to determine per layer allocation budgets in advance. PAAM then retrains the network without considering the scores. This process is repeated until convergence.  PAAM pipeline is shown in Figures~\ref{fig:fig1}-\ref{fig:att_pruner}. Our experimental results show that PAAM yields higher pruning ratios while preserving higher accuracy. 

\textbf{In summary, this work has the following contributions:}
\begin{enumerate}
\item We introduce PAAM, \textit{an end-to-end algorithm that learns the importance scores directly from the network weights and filters}. Our method allows extracting hidden correlations in the filter weights for training the scores, rather than relying only on the weight magnitudes. The feature maps are multiplied by our learned scores during training. This way \textit{our method implicitly accounts for the data samples through loss propagation}, enabling PAAM to enjoy the advantages of both data-free and data-informed methods.

\item PAAM \textit{automatically calculates global importance scores for all filters and determines layer-specific budgets} with only one global hyper-parameter.

\item We empirically investigate the pruning performance of PAAM in various pruning tasks and compare it to advanced state-of-the-art pruning algorithms. \textit{Our method proves to be competitive and yields higher pruning ratios while preserving higher accuracy}.
\end{enumerate}

\section{Pruning by Active Attention Manipulation Algorithm}
In this section, we first introduce our notation and then incrementally describe our pruning algorithm.

\textbf{2.1~~Notation.}
The filter weights of layer $l$ are given by the tuple $\mathcal{F}_l\,{\in}\,{\rm I\!R}^{F \times C\times K \times K}$ where $F$ is the number of filters, $C$ the number of input channels, and $K$ the size of the convolutional kernel. The feature maps of layer $l$ are given by the tuple $A_l\in {\rm I\!R}^{F\times H \times W}$ where $H$ and $W$ are the image height and width respectively. For simplicity, we ignore the batch dimension in our formulas.

\textbf{2.2~~Score Learning.} The score-learning function of PAAM, for a layer $l$ of the CNN to be pruned, can be intuitively understood as a single-layer independent network, whose inputs are the filter weights $\mathcal{F}_l$ of layer $l$, and the outputs are the scores $S_{l}$ associated to the filters. The network first transforms the input weights $\mathcal{F}_l$ to a score vector $\mathcal{F}_l\ W^{\mathcal{F}}$, whose length equals the number $F$ of filters of layer $l$, and then passes the result through an activation function $\phi$, properly spreading the scores within the $[0,1\,{+}\,\epsilon]$ interval. The resulting scores are then used by an $l1$ regularization term of the cost function. The choice of $\phi$ and regularization term is discussed in the next sections. Formally:
\begin{equation}
\centering
\label{eq:scores}
S_{l} = \phi(\mathcal{F}_l\ W^{\mathcal{F}})
\end{equation}
where $\phi$ is the activation function and $W^{\mathcal{F}}\,{\in}\,{\rm I\!R}^{(F \times C\times K \times K)\times F}$ is the weight matrix. This transformation, through $W^{\mathcal{F}}$ and $\phi$ (vanilla PAM), is similar to additive attention~\cite{additiveAtt}. We will therefore refer in the following to the score-learning network as the attention network (AN). Intuitively, the AN weight matrix $W^{\mathcal{F}}$ captures the hidden correlations among filters. Unfor\-tu\-na\-tely, for layers with a large number of filters, as for ImageNet, this matrix will become too large to train. 

To capture the correlations in a scalable fashion, we would like to first partition the input in chunks, compute the correlations within each chunk, and then compute the correlations among chunks. But this is exactly what a scaled dot-product attention achieves~\cite{AttAllYouNeed}. First, PAAM reformats the input as a binary matrix $\mathcal{F}_l\,{\in}\,{\rm I\!R}^{(F) \times (C\times K \times K)}$, where the chunks are its rows. Second, it obtains the correlations within each chunk as the queries $Q_{l}\,{=}\,\mathcal{F}_l\ W^{Q}$ and keys $K_{l}\,{=}\,\mathcal{F}_l W^{K}$. Here, the query and key weight matrices $W^{Q},W^{K}\,{\in}\,{\rm I\!R}^{(C\times K \times K) \times d_l}$ are much smaller, as they consider only a chunk, and $d_l\,{=}\,F$ is the hidden dimension of the layer. The queries and keys are of shape $Q_{l},K_{l}\,{\in}\,{\rm I\!R}^{(F) \times (d_l)}$. Third, PAAM obtains the cross correlations among chunks by multiplying the query matrix $Q_{l}$ with the transpose $K_l^T$ of the key matrix. Finally, the scores are obtained by averaging and normalizing the result, and passing it through the activation function $\phi$. Formally:
%
%
\begin{equation}
\centering
\label{eq:dotAttention}
S_{l} = \phi(\frac{mean(Q_l\times K_l^T)}{\alpha \sqrt{d_l}})
\end{equation}
where $\alpha$ is a scaling factor that we tune. Note that PAAM does not need to learn the value-weight matrix, compute the values, and multiply them with the above result, as it is only interested in the correlation among filters. Learning filter weights is left to CNN training. However, the scores $S_{l}$ learned by PAAM, are then pointwise (${\odot}$) multiplied by the feature maps $A_{l}$ of the same layer: 
\begin{equation}
\centering
\label{eq:scores_mul}
A_{l}^{'} = S_{l}\,{\odot}\,A_{l}
\end{equation}
The closer a filter score is to $1$, the more the corresponding feature map is preserved. Note also that by using an analog value for scores while training the AN, allows PAAM to compare the relative importance of scores later on. In particular, this is useful when PAAM employs the globally allocated budget, and the cumulative score distribution, to select the filters to be used during CNN training. 

\textbf{2.3~~Activation Function.} SoftMax is the typical choice of activation function in additive attention when computing importance \cite{vaswani2017attention}. However, SoftMax is not a suitable choice for filter scores. While the range of its outputs is between $0$ and $1$, the sum of its outputs has to be $1$, meaning that either all scores will have a small value, or there will be only one score with value $1$.

In contrast to SoftMax, we would intuitively want that many filter scores are possibly close to $1$. More formally, the scores should have the following three main attributes: 1) All filter scores should have a positive value that ranges between $0$ and $1$, as is the case in SoftMax.
2) All filter scores should adapt from their initial value of $1$, as we start with a completely dense model. 3) The filter-scores activation function should have non-zero gradients over their entire input domain.

\begin{wrapfigure}[11]{r}{0.3\textwidth}
\vspace{-7mm}
\centering
   \includegraphics[width=0.33\textwidth]{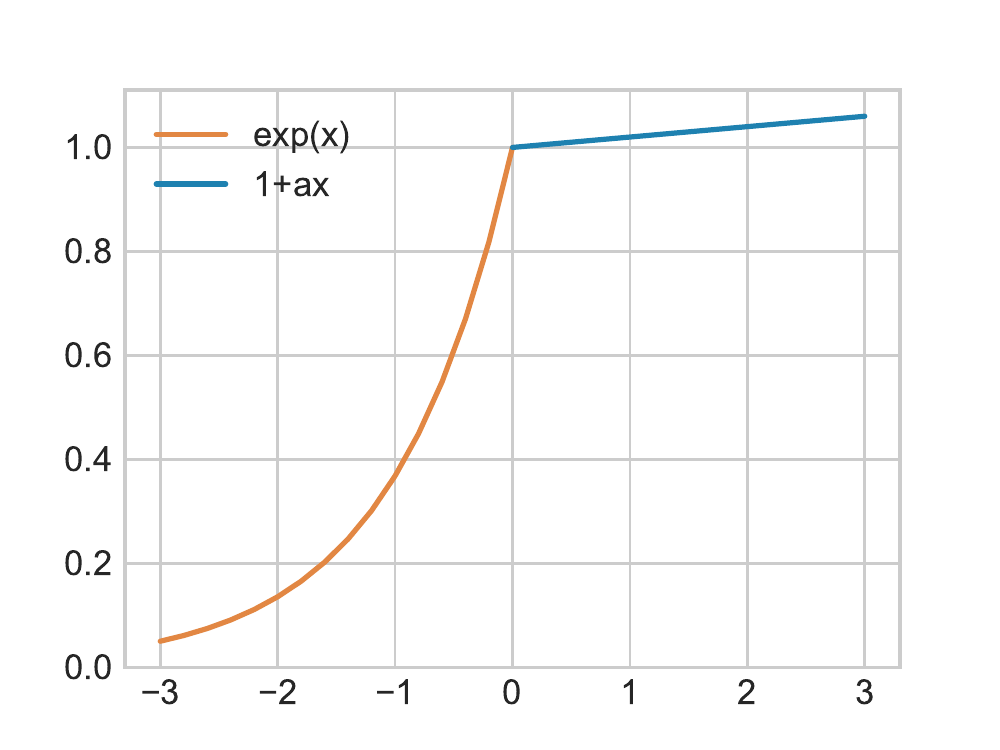}
   \vspace{-6mm}
   \caption{The leaky-expo\-nen\-tial activation function.}
\vspace{2mm}
\label{fig:leaky_expo}
\end{wrapfigure}

Sigmoidal activations satisfy Attributes $1$ and $3$. However, they have difficulties with Attribute $2$. For high temperatures, sigmoids behave like steps, and scores quickly converge to $0$ or $1$. The chance these scores change again is very low, as the gradient is close to zero at this point.  Conversely, for low temperatures, scores have a hard time converging to $0$ or $1$. Finding the optimal temperature needs an extensive search for each of the layers separately. Finally, starting from a dense network with all scores set to $1$ is not feasible.

To satisfy Attributes $1$-$3$, we designed our own activation function, as shown in Figure~\ref{fig:leaky_expo}. First, in order to ensure that scores are positive, we use an exponential activation function and learn its logarithm value. Second, we allow the activation to be leaky, by not saturating it at $1$, as this would result in $0$ gradients, and scores getting stuck at $1$. Formally, our leaky-exponential activation function is defined as follows, where $a$ is a small value (an ablation study in provided in Appendix):
\begin{equation}
\centering
\label{eq:activationfunc}
\phi(x) = 
     \begin{cases}
       e^x &\quad\text{if $x<0$} \\ 
       1.0+ax &\quad\text{if $x\ge0$}
     \end{cases}
\end{equation}
%
%
%
\textbf{2.4~~Optimization Problem.} The PAAM optimization problem can be formulated as in Equation~(\ref{eq:loss}), where $\mathcal{L}$ is the CNN loss function, and $f$ is the CNN function with inputs $x$, labels $y$, and parameters $\mathcal{W}$, modulated by the AN with inputs the filter weights, parameters $\mathcal{V}$, and outputs the scores $S$. Let $\|g(S_l,p)\|_{1}$ denote the $l_1$-norm of the binarized filter scores of layer $l$, $F_l$ the number of filters of layer $l$, $L$ the number of layers, and $p$ the pruning ratio. Then PAAM should preserve only as many filters as desired by $p$, while minimizing at the same time the loss function.
\begin{equation}
\centering
\label{eq:loss}
\begin{split}
\min_{\mathcal{V}} \mathcal{L}(y, f(x; \mathcal{W}, S))\qquad
s.t.\ \ \sum_{l=0}^{L}\|g(S_l,p)\|_{1} - p\sum_{l=0}^{L}F_l = 0
\end{split}
\end{equation}
Function $g(S_l,p)$ is discussed in next section. The budget constraint is addressed by adding an $l1$ regularization term to the loss function, while training the weights of the AN. This term employs the 
analog scores of the filters in each layer, which are computed as described above. Formally:
\begin{equation}
\centering
\label{eq:regularized_loss}
\mathcal{L}^{'}(S) = \mathcal{L}(y, f(x; \mathcal{W}, S)) + \lambda \sum_{l=0}^{L}\|S_l\|_{1}
\end{equation}
where $\lambda$ is a global hyper-parameter controlling the regularizer's effect on the total loss. Since the loss function consists now of both classification accuracy and the $l_1$-norm cost of the scores, reducing filter scores to decrease $l_1$ cost, directly influences accuracy as scores multiply the activation maps.

In more detail, by incorporating the classification and the $l_1$-norm of the scores in the loss function, the effect of the score of each filter is accounted for in the loss value in two ways: 
\begin{enumerate}
    \item When multiplied by small scores, feature maps have less influence on classification. This may result in a larger loss if these maps are important and vice versa.
    \item On the other hand, the part of the loss function consisting of the $l_1$-norm of the scores, decreases the loss value when there are more scores with small values.
\end{enumerate}
In some sense, this duality mimics the way synapses are pruned or strengthened through habituation or conditioning in nature. If the use of a synapse does not lead to a reward it is pruned, otherwise, it is strengthened. Moreover, pruning is essential in order to avoid saturation and enable learning.

The value of $\lambda$ plays a very important role in keeping the balance between the two factors, and in controlling the pruning ratio. When $\lambda$ is $0$, all scores will stay close to $1$, and when it is large,  all scores will converge to $0$. 
During training of the AN layers, we freeze all CNN parameters. The scores are directly learned from the filter weights, reflecting attributes such as the filter-weights magnitude and the correlation among the different filters of the same layer. A globally pruned network is obtained by simultaneously training the scores of all layers. This allows propagating the scored influence of the feature maps of one layer to the feature maps and scores of the next layers.

\textbf{2.5~~Training schedule.} PAAM starts training the CNN by a number of warm-up epochs. During this phase, it trains the dense CNN with all filter scores fixed to $1$, and the AN weights frozen. After the warm-up phase, PAAM starts training both the AN weights and the CNN weights.

\begin{wrapfigure}[24]{R}{0.5\textwidth}
\vspace{-9mm}
\begin{minipage}{0.5\textwidth}
\begin{algorithm}[H]
\caption{PAAM}
\label{alg:1}
\begin{algorithmic}
\STATE \textbf{Inputs:} Mini-batches $\mathcal{B}$; CNN $N$ after warm-up, para\-me\-trized by $\mathcal{W}$; AN para\-me\-trized by $\mathcal{V}$; Regularization hyper-parameter $\lambda$; PAAM-training epochs $E_p$; Training mini-epochs $E_s$.\\[1mm] 
\STATE \textbf{Output:} Pruned CNN for fine-tuning.\\[1mm]
\FOR{$i$ in $E_p$}
\FOR{$j$ in $E_s$}
\FOR{$b$ \textbf{in} $\mathcal{B}$}
\STATE $\min_{\mathcal{V}} \mathcal{L}(y, f(x; \mathcal{W}, S))\,{+}\,\lambda \sum_{l=0}^{L}\|S_l\|_{1}$\\
\STATE calculate gradients w.r.t $\mathcal{V}$
\STATE Update $\mathcal{V}$ by the optimizer
\ENDFOR
\ENDFOR
\FOR{$j$ in $E_s$}
\FOR{$b$ \textbf{in} $\mathcal{B}$}
\STATE $\min_{\mathcal{W}} \mathcal{L}(y, f(x; \mathcal{W}, S))$\\
\STATE calculate gradients w.r.t $\mathcal{W}$
\STATE Update $\mathcal{W}$ by optimizer
\ENDFOR
\ENDFOR
\ENDFOR
\STATE \textbf{Return} $N$ with parameters $\mathcal{W}$
\end{algorithmic}
\end{algorithm}
\end{minipage}
\end{wrapfigure}

As discussed, PAAM multiplies the CNN feature maps by the AN analog scores. This way, when a score is $1$, the corresponding feature map is fully preserved. As the score gets smaller, its feature map is getting weaker, since it is multiplied by a value smaller than $1$, and finally almost completely pruned, as the score gets closer to $0$. However, while the scores are getting trained, the network weights can learn to adapt to the feature-map intensity. This means that, although a filter may have a low score, the loss of the feature map intensity can be compensated by magnifying the feature map itself. In order to prevent this, PAAM uses an alternating training approach \cite{peste2021ac}. 

First, PAAM trains the AN for a few epochs, while freezing all CNN parameters. Then it trains the sparse CNN for a few epochs, while freezing all AN parameters. Sparsity is obtained by multiplying feature maps $A_l$ with a binary version $g(S_l,p)$ of scores learned so far. Function $g$ is  described in the next section. AN training and CNN training alternate for a predefined number of training cycles.
After the training phase is completed, PAAM finishes the pruning procedure, by fine-tuning the sparse model. Note that PAAM does not need a pretrained dense model to find the importance of filters or feature maps. Even when starting from scratch, after a warm-up phase, it can find an optimal sparse sub-network by training the AN and CNN weights, respectively, as discussed above.

\textbf{2.6~~Binarization function.} As mentioned above, during CNN training, PAAM fixes the scores to~1, for all filters retained according to their score magnitude and pruning ratio. All other filters have score 0. The analog-to-digital transformation is accomplished by the function $g(s,p)$, as follows: 
\begin{equation}
\centering
\label{eq:binary_scores}
g(s,p) = 
     \begin{cases}
       $1$ &\quad\text{if $s$}\ge\theta(S,p)\\
       $0$ &\quad\text{otherwise.} \\ 
     \end{cases}
\end{equation}
where $s$ is the analog-score value of a filter, and $\theta(S,p)$ is the optimal pruning threshold that is computed, by considering the cumulative distribution of all scores $s\,{\in}\,S$, and the pruning ratio $p$. We will discuss how PAAM has computed $\theta(S,p)$ for Cifar10 and ImageNet, in Section~\ref{sec:experimental evaluation}.

\textbf{2.7~~Complete Algorithm}
Algorithm~\ref{alg:1} describes PAAM's intermediate stage. While training the AN parameters, it uses Equations~(\ref{eq:activationfunc}-\ref{eq:regularized_loss}). While training the CNN parameters, it uses Equation~(\ref{eq:binary_scores}).

\section{Experimental Evaluation}
\label{sec:experimental evaluation}
In this section, we first present our implementation details and then discuss our experimental results.

\textbf{Baselines.} We compare PAAM to numerous standard and advanced pruning methods. The models include: L1 norm \cite{li2017pruning}, neuron-importance score propagation (NISP) \cite{nisp2018}, soft filter pruning (SFP)~\cite{SFP2018}, discrimination-aware channel pruning (DCP)~\cite{DCP2018}, DCP-adapt~\cite{DCP2018},  collaborative  channel  pruning (CCP)~\cite{CCP2019}, generative adversarial learning (GAL)~\cite{GAL2019}, filter-pruning using high-rank feature maps (HRank)~\cite{HRank2020}, discrete model compression (DMC)~\cite{DMC2020},  network pruning via performance maximization (NPPM)~\cite{NPPM2021}, channel independence-based pruning (CHIP)~\cite{sui2021chip}, and auto-ML for model compression (AMC)~\cite{AMC2018}.

\subsection{Implementation Details}

\textbf{Training Procedure.} Our first experiments are on  CIFAR-10 with two different models: ResNet56 and ResNet110. For both models, PAAM does not use a fully trained network as the baseline to prune. We train each model for 50 warm-up epochs. During warm-up (see Figure \ref{fig:fig1}), we use a batch size of $256$ and stochastic gradient descent (SGD) as optimizer, with 0.1 as the initial learning rate, 0.9 as momentum, and 0.0005 for the weight decay. We then use PAAM to train the scores.

After warm-up, PAAM trains the AN and the CNN in alternation, with the other-network parameters frozen. PAAM trains the AN weights for 3 epochs, and uses the regularized loss defined in Equation~(\ref{eq:loss_extention}). Then PAAM switches to training the CNN weights for 6 epochs, and uses the classification loss. In this phase, PAAM employs Equation~\eqref{eq:binary_scores} as the analog-to-digital score-conversion function. When PAAM trains the AN, the CNN feature maps get multiplied by the analog scores. PAAM repeats these two phases for 10 times, summing up to training for 90 epochs. PAAM uses the ADAM optimizer with learning rates of $10^-{6}$~/~$10^{-3}$ for training the AN~/~CNN parameters. 

\begin{table*}[t]
  \centering
  \caption{Results on the Cifar10 dataset with ResNet56 and ResNet110 for medium (top part) and for large (bottom part) pruning ratios, respectively. The quantity $\Delta$ shows the difference in the accuracy (Acc) between the baseline dense model used, and the resulting pruned network. Note that PAAM does not use a pretrained baseline. Numbers are taken from the reported results of the cited papers.}
  \vspace*{1.5ex}
  \begin{adjustbox}{width=0.8\textwidth}
  \begin{tabular}{lcccccc}
    \toprule
    \multicolumn{1}{c}{} & \multicolumn{5}{c}{ResNet56}                   \\
   \midrule
    \textbf{Method}     & Baseline Acc($\%$)    & Pruned Acc($\%$)   & $\Delta$($\%$)    & $\downarrow$Params($\%$)    & $\downarrow$Flops($\%$)  \\
    \midrule
    $l1$ Norm (2017)~\cite{li2017pruning} & 93.04  & 93.06  & +0.02 & 13.7  &  27.6 \\
    NISP (2017)~\cite{nisp2018} & N/A & N/A  & -0.03 &  42.2 &  35.5    \\
    SFP (2018)~\cite{SFP2018} & 93.59  & 93.78  & +0.19 &  N/A &  41.1    \\
    DCP (2018)~\cite{DCP2018} & 93.80  & 93.59  & -0.31 & N/A  &  50.0    \\
    DCP-Adapt (2018)~\cite{DCP2018} & 93.80  & 93.81  & +0.01 & N/A  &  47.0    \\
    CCP (2019)~\cite{CCP2019} & 93.50  & 93.46  & -0.04 & N/A  &  47.0    \\
    GAL (2019)~\cite{GAL2019} & 93.26 & 93.38  & +0.12 & 11.8  &  37.6    \\
    HRank (2020)~\cite{HRank2020} & 93.26  & 93.52  & +0.26 & 16.8  &  29.3    \\
    DMC (2020)~\cite{DMC2020} & 93.62  & 93.69  & +0.07 & N/A  &   50.0   \\
    NPPM (2021)~\cite{NPPM2021} & 93.04  & 93.40  & +0.36 &  N/A &  50.0    \\
    CHIP (2021)~\cite{sui2021chip}  & 93.26 & 94.16  & +0.90 & 42.8 & 47.4  \\
    \textbf{PAAM(Vanilla)}    &  & \textbf{94.28} & \textbf{+1.02} & 52.3 & 49.3 \\
    \textbf{PAAM(KQ)}    &  & 94.01 & +0.75 & \textbf{54.0} & \textbf{52.3} \\
    \midrule
    AMC (2018)~\cite{AMC2018} &  92.80 & 91.90  & -0.90 & N/A  &  50.0    \\
    GAL (2019)~\cite{GAL2019} & 93.26 & 91.58  & -1.68 & 65.9  & 60.2    \\
    HRank (2020)~\cite{HRank2020} & 93.26  & 90.72  & -2.54 & 68.1  &  74.1    \\
    CHIP (2021)~\cite{sui2021chip}  & 93.26 & 92.05  & -1.21 & 71.8 & 72.3  \\
    \textbf{PAAM(Vanilla)}    &  & 92.42 & -0.84 & \textbf{83.0} & 70.5 \\
    \textbf{PAAM(KQ)}    &  & \textbf{93.10} & \textbf{-0.16} & 70.0 & \textbf{74.7} \\
    \midrule
    \multicolumn{1}{c}{} & \multicolumn{5}{c}{ResNet110}                   \\
   \midrule
    $l1$ Norm (2017)~\cite{li2017pruning} & 93.53  & 93.30  & -0.23 & 32.4  & 38.7 \\
    NISP (2018)~\cite{nisp2018} & N/A & N/A  & -0.18 &  43.78 &  43.25    \\
    SFP (2018)~\cite{SFP2018} & 93.68  & 93.86  & +0.18 &  N/A &  40.8    \\
    GAL (2019)~\cite{GAL2019} & 93.50 & 93.59  & +0.09 & 18.7  &  4.1    \\
    HRank (2020)~\cite{HRank2020} & 93.50  & 94.23  & +0.73 & 39.4  &  41.2    \\
    CHIP (2021)~\cite{sui2021chip} & 93.50 & 94.44  & +0.94 & 48.3 & \textbf{52.1}  \\
    \textbf{PAAM(Vanilla)}    &  & \textbf{94.69} & \textbf{+1.19} & \textbf{54.9} & 51.3 \\
    \textbf{PAAM(KQ)}    &  & 94.63 & +1.13 & 48.3 & 40.9 \\
    \midrule
    GAL (2019)~\cite{GAL2019} & 93.50 & 92.74  & -0.76 & 44.8  &  48.5    \\
    HRank (2020)~\cite{HRank2020} & 93.50  & 92.65  & -0.85 & 68.7  &  68.6    \\
    CHIP (2021)~\cite{sui2021chip} & 93.50 & 93.63  & +0.13 & 68.3 & 71.3  \\
    \textbf{PAAM(Vanilla)}    &  & 93.68 & +0.18 & \textbf{79.3} & 74.8 \\
    \textbf{PAAM(KQ)}    &  & \textbf{93.99} & \textbf{+0.49} & 66.5 & \textbf{77.2} \\
    \bottomrule
  \end{tabular}
  \end{adjustbox}
  \label{tb:resultsCifar}
  \vspace*{-2ex}
\end{table*}

After training, PAAM fine-tunes the CNN. In this stage, it removes the AN, and keeps only the CNN filters with the binary score of one. PAAM uses Equation~\eqref{eq:binary_scores} to compute the binary scores from their analog counterparts, which is in fact PAAM's ultimate goal in pruning. Each feature map is either removed entirely (having score zero), or is completely preserved (having score one). PAAM uses the SGD optimizer, with the same parameters as in warm-up, and tunes the CNN for 300 epochs. 

Our second experiments are on ImageNet with ResNet50. As ResNet50 has many layers with a very large number of filters, PAAM (Vanilla) runs out of memory, whereas PAAM (KQ) has no problems to scale up. In this experiment, PAAM starts with the pretrained ResNet50 model, in order to save time. PAAM alternates the training of the AN/CNN networks for 54 epochs, with 3 epochs for the AN and 6 epochs for the CNN in each iteration. PAAM then fine tunes the pruned CNN for 200 epochs. PAAM sets the batch size, momentum, weight
decay and initial learning rate to 128, 0.875, 2e-05 and 0.5, respectively, and uses a cosine-annealing learning-rate scheduler with 5 as warm-up.

\textbf{Balancing the Pruned Parameters and Flops.} PAAM uses while training the AN, the regularized loss to guide the scores to the desired filter budget. However, the number of parameters of each CNN layer is different from the number of computation flops required for that CNN layer:
\begin{equation}
\centering
\label{eq:params}
\begin{split}
(Params)_l = C_l\times F_l\times K_l \times K_l, \quad (Flops)_l = C_l\times F_l\times K_l \times K_l \times W_l \times H_l
\end{split}
\end{equation}
As the number of flops also depends on the image size, early convolutional layers, before the max-pooling layers, require more flops as the image size is larger. To properly balance pruning vs flops, we multiply the $l_1$ norm of each layer by the fraction of the image sizes of that layer and the last layer.

For experiments on ResNet56, PAAM used $\lambda\,{=}\,5\,{\times}\,10^{-4}$ and $\lambda\,{=}\,15\,{\times}\,10^{-4}$ for pruning ratios of $52.3\%$ and $83.0\%$ respectively. On ResNet110, PAAM used $\lambda\,{=}\,2\,{\times}\,10^{-4}$ and $\lambda\,{=}\,5.5\,{\times}\,10^{-4}$ to prune $54.9\%$ and $79.3\%$ of the network parameters. On ResNet50, PAAM used $\lambda\,{=}\,10^{-6}$.

\subsection{Performance Evaluation}
This section evaluates the performance of PAAM (Vanilla) and PAAM (KQ) in comparison to the state of the art. In the appendix, we provide more experimental results, including the training curves.

\textbf{Experimental results.}
Table~\ref{tb:resultsCifar} compares the performance of PAAM (Vanilla) and PAAM (KQ) to the sate-of-the-art filter-pruning methods (SOA) on Cifar10. Both versions of PAAM outperform SOA, achieving higher accuracy while pruning more parameters. Specifically, on ResNet56 PAAM results in an accuracy increase, even compared to the dense baselines, while significantly pruning  parameters and flops.
The same is true for ResNet110. For high pruning ratios, PAAM outperforms CHIP~\cite{sui2021chip}, the next best method, with better accuracy while pruning more parameters and flops on ResNet56. Similarly, on ResNet110, PAAM outperforms CHIP in both accuracy and pruning.

PAAM (Vanilla) performs slightly better than PAAM (KQ), gaining higher accuracy and pruned parameters on Cifar10. This was to be expected, as PAAM (KQ) computes an approximation of the exact correlations derived by PAAM (Vanilla). As discussed, PAAM (KQ) approximates Vanilla by computing the filter correlations chunk by chunk, and then the cross-correlations among chunks. 
\begin{table*}[t]
  \centering
  \caption{Experimental results on ImageNet with ResNet50. $\Delta$ is the difference in accuracy between baseline and pruned networks. The numbers are taken from the reported results in the cited papers.}
  \vspace*{1.5ex}
  \begin{adjustbox}{width=0.9\textwidth}
  \begin{tabular}{lccccccccc}
    \toprule
    \multicolumn{1}{c}{} & \multicolumn{3}{c}{Top1 Acc($\%$)}  & \multicolumn{3}{c}{Top5 Acc($\%$)}   & \multicolumn{2}{c}{$\downarrow$($\%$)}                 \\
   \midrule
    \textbf{Method}     & Baseline    & Pruned   & $\Delta$    &  Baseline    & Pruned   & $\Delta$    & Params    & Flops  \\
    \midrule
    SFP (2018)~\cite{SFP2018} & 76.15  & 74.61  & -1.54 & 92.87 &  92.06   & -0.81 & N/A & 41.8 \\
    DCP (2018)~\cite{DCP2018} & 76.01 & 74.95  & -1.06 & 92.93  &  92.32   & -0.61 & 50.9 & 55.6  \\
    FPGM (2019)~\cite{FPGM2019} & 76.15 & 75.59 & -0.56 & 92.87 & 92.63 & -0.24 & 37.5 & 42.2 \\
    CCP (2019)~\cite{CCP2019} & 76.15  & 75.5  & -0.65 & 92.87  &  92.62  & -0.25 & N/A & 48.8   \\
    GAL (2019)~\cite{GAL2019} & 76.15 & 71.95  & -4.2 & 92.87  &  90.94  & -1.93 & 16.9 & 43.0   \\
    PFP (2020)~\cite{liebenwein2020provable} & 76.13 & 75.21 & -0.92 & 92.87 & 92.43 & 0.44 & 30.1 & 44.0 \\
    AutoPruner (2020)~\cite{AutoPruner} & 76.15 & 74.76 & -1.39 & 92.87 & 92.15 & -0.72 & N/A & 48.7 \\
    HRank (2020)~\cite{HRank2020} & 76.15  & 74.98  & -1.17 & 92.87  &  92.33  & -0.54 & 36.6 & 43.7  \\
    SCOP (2020)~\cite{SCOP2020} & 76.15  & 75.95 & -0.20 & 92.87  & 92.79 & -0.08 & 42.8 & 45.3 \\
    DMC (2020)~\cite{DMC2020} & 76.15  & 75.35 & -0.80 & 92.87  & 92.49 & -0.38 & N/A & 55.0  \\
    NPPM (2021)~\cite{NPPM2021} & 76.15  & 75.96  & -0.19 & 92.87 & 92.75  & -0.12 & N/A & 56.0  \\
    CHIP (2021)~\cite{sui2021chip}  & 76.15 & 76.30  & +0.15 & 92.87 & 92.91  & +0.04 & 40.8 & 44.8 \\
    \textbf{PAAM(KQ)}    & 76.15 & \textbf{77.21}  & \textbf{+1.06} & 92.87 & \textbf{93.55} & \textbf{+0.68} & \textbf{51.1} & 31.9 \\
    \bottomrule
  \end{tabular}
  \end{adjustbox}
  \label{tb:resultsImageNet}
  \vspace*{-2ex}
\end{table*}

Table~\ref{tb:resultsImageNet} compares the performance of PAAM (KQ) to the SOA on ImagNet. Starting from a baseline of $76.15\%$ accuracy, PAAM achieves a higher accuracy of $77.21\%$ even when compared to the baseline (an increase of $1.06\%$), while pruning $51.1\%$ of the parameters and $31.9\%$ of the flops. 

\textbf{Per-layer Budget Discovery.} A remarkable feature of PAAM it that it finds the optimal sparse sub-networks in a fully-automated pipeline and does not require a budget-allocation schedule per layer. Figure~\ref{fig:budgets} shows the discovered networks in our experiments from ResNet56 and ResNet110.

In each residual block of the sub-networks learned by PAAM, the first layer has lower number of filters remaining after pruning, compared to the second layer. This structure is similar to the bottle-neck architecture used in ResNets with a large number of layers. It enables the network to concentrate on the most important features with less capacity, which is exactly what we are looking for with pruning. Many existing pruning algorithms use similar bottle-neck structures, when manually defining layer budgets for pruning~\cite{sui2021chip, HRank2020}. PAAM is able to discover this pattern automatically, without supervision. Moreover, for high pruning ratios on ResNet110, there are blocks emerging with 0 remaining filters in the first layers. This shows that PAAM can also remove entire layers when required, having more freedom in the possible sparse sub-networks search space.

\begin{wraptable}{r}{0.5\textwidth}
  \centering
\vspace*{-5ex}
  \caption{Ablation results on Cifar10 dataset with ResNet56 pruned with different threshold values.}
  \vspace*{1.5ex}
  \begin{adjustbox}{width=0.45\textwidth}
  \begin{tabular}{lccccccccc}
    \toprule
    \textbf{Threshold($\theta$)}      &  $\downarrow$($\%$)Params    & $\downarrow$($\%$)Flops & Pruned Accuracy($\%$) \\
    \midrule
    0.2 & 	46.0&	36.7&	94.30\\
    0.3 & 	50.2&	42.4&	94.32  \\
    0.4 & 	51.6&	46.6&	94.29 \\
    0.5 &   52.3&	49.3&	94.28  \\
    0.6 &   58.2&	57.6&	93.97   \\
    0.7 &   64.7&	65.9&	93.66 \\
    \bottomrule
  \end{tabular}
  \end{adjustbox}
  \label{tb:ThresholdAblation}
  \vspace*{-3ex}
\end{wraptable} 

\textbf{The threshold value.} PAAM computes the optimal threshold $\theta(S,p)$ of Equation~(\ref{eq:binary_scores}), based on the pruning ratio $p$, and the cumulative distribution of filter scores, as shown in Figure~\ref{fig:filterScores}, for the Cifar10 experiments on ResNet56 and ResNet110. This resulted in $\theta(S,p)\,{=}\,0.5$.

As one can see, the leaky-exponential activation function and the regularized-loss functions of PAAM, ensure that the majority of the scores are pushed to values close to zero and one.  Hence, most of the scores lie very close to the two ends of the $[0,1]$ interval, and $0.5$ is in its center. Moreover, as it is shown in Figure~\ref{fig:filterScores}, the cumulative-distribution plot of scores is almost horizontal before 0.5. 

In order to provide more insight on the proper choice for the value of the threshold, we conducted an ablation experiment on Cifar10 with the ResNet56 network. After the scores training epochs, we pruned the filters with different thresholds. This process is then followed by fine-tuning epochs. Table~\ref{tb:ThresholdAblation} shows the accuracies of the pruned network with each threshold value. As the table shows, the pruned parameters and flops percentages change slowly with changing the score threshold. As one can see, a threshold $\theta$ of 0.5 ensures best accuracy for most pruning of parameters and flops.

\section{Related Work}

\textbf{Attention.} In recent years, attention has achieved great success in many computer-vision tasks~\cite{AttentionSurvey}, from attentional CNNs to transformers~\cite{VIT}. These works mainly focus on improving the accuracy of vision networks by learning to what (channel) or where (spatial) to pay attention to. The squeeze-and-excitation network~\cite{SENet} is one of the pioneers in channel-attention networks which calculate channel scores from feature maps to enhance classification. 

\textit{PAAM.} To the best of our knowledge, this paper is the first to employ attention, tailored to computing filter-importance scores from filter weights, for pruning CNNs. PAAM~(KQ) only uses the key and the query matrices to compute the filter correlations within a layer, and disregards the value matrices.

\textbf{Pruning.} Filter pruning is a very popular structured-pruning method for sparsifying CNNs, which supports storage reduction and processing efficiency, without requiring any special library. One can roughly classify existing filter-pruning methods into three main categories, based on their filter-selection approaches: data-free, data-informed, and training-aware methods, respectively.
\begin{figure*}[t]
\centering
   \includegraphics[width=0.87\textwidth]{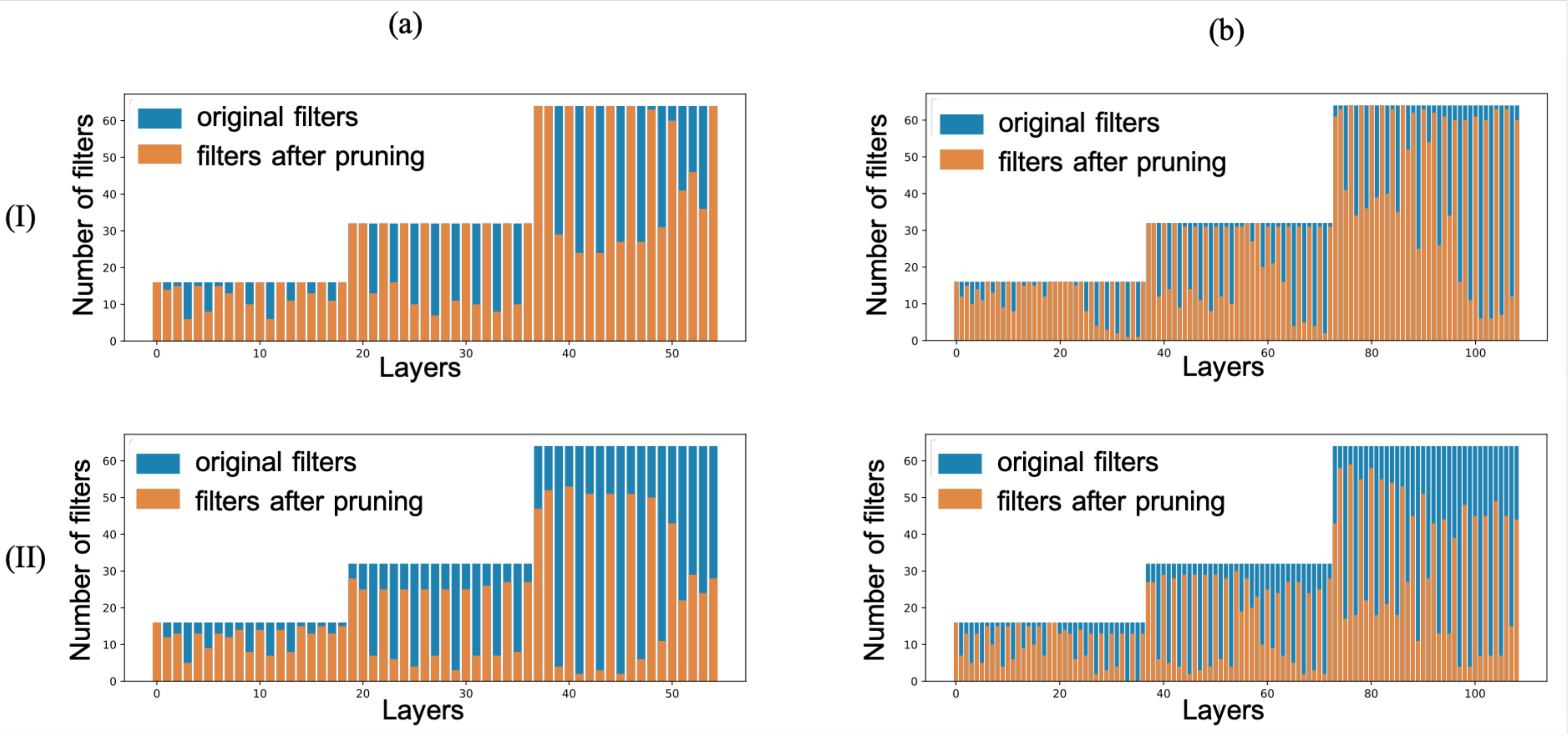}
   \vspace*{-5mm}
   \caption{CNNs discovered by PAAM from ResNet56 (a) and ResNet110 (b), with medium (I) and high (II) pruning ratios. PAAM finds the optimal per-layer budgets automatically.}
   \label{fig:budgets}
   \vspace*{-5mm}
\end{figure*}

\textit{Data-free filter pruning.} 
Following-up on the weights-magnitude-pruning method, where the weights with the smallest absolute values are considered the least important,~\cite{li2017pruning} uses the sum of absolute values of the weights in a filter (the $l1$ norm) to prune the filters with the smallest weight values. ~\cite{SFP2018} dynamically prunes the filters with the smallest $l2$ norm in each epoch by setting them to zero and repeating this process in each epoch during training. ~\cite{he2019filterFPGM} uses the geometric median of filters as the pruning criterion. In summary, although data-free methods can gain acceptable performance levels, several works have later shown that considering the training data during the pruning process, will notably improve pruning precision~\cite{hoefler2021sparsity}. 
\begin{figure*}[]
\centering
   \includegraphics[width=1.\textwidth]{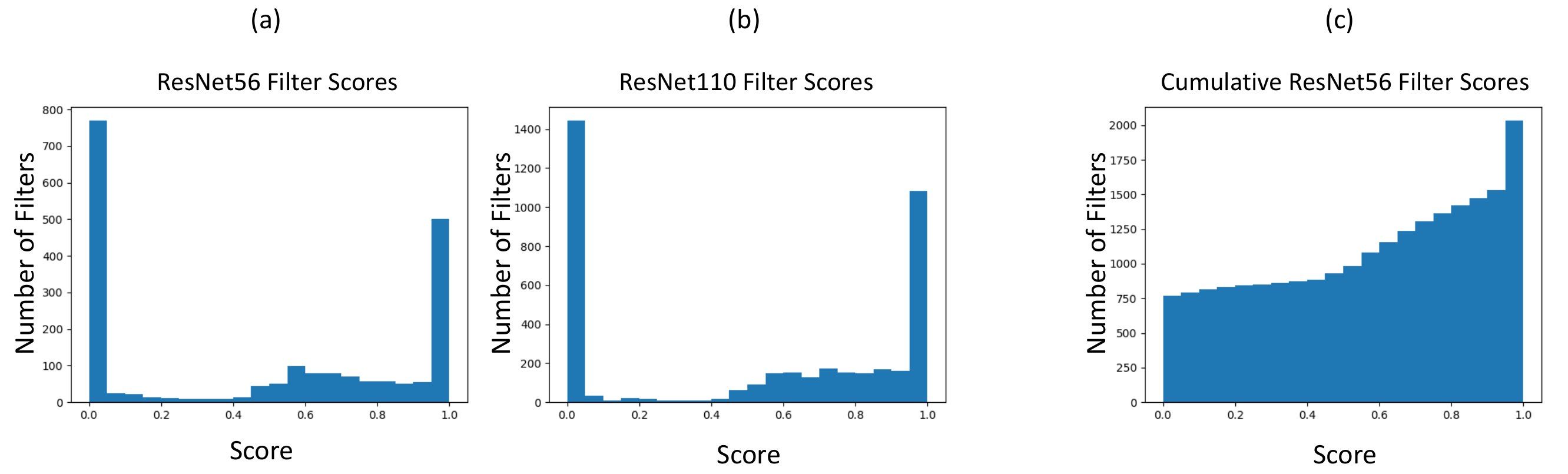}
\vspace*{-5ex}   
   \caption{Filter-score histograms for Cifar10 experiments. Most filters have zero and one scores.}
   \label{fig:filterScores}
\vspace*{-4ex}   
\end{figure*}

\textit{Data-informed filter pruning.}
Many pruning methods focus on feature maps as they provide rich information from both data distribution and filters. ~\cite{HRank2020} prunes filters whose feature maps have lowest ranks, and ~\cite{liebenwein2020provable} uses a sensitivity measure to prune filters with lowest effect on the outputs, giving provable sparsity guaranties. Motivated by the importance of inter-channel perspective for pruning,~\cite{sui2021chip} uses the nuclear norm of feature maps as an independence metric to prune the filters whose feature maps are the most dependent on the others.

\textit{Training-aware filter pruning.}
These methods use training to learn a filter-importance metric or guide the network to a sparse structure. Exploiting magnitude pruning, some add regularization factors to the loss, to guide filters values towards zero. ~\cite{NIPS2016_41bfd20a} and ~\cite{louizos2018learning} use group-lasso and $l0$ regularization, respectively. Instead of solely relying on the weight magnitudes,~\cite{DCP2018} proposes a discrimination-aware channel-pruning method, by defining per-layer classification losses. ~\cite{DMC2020} trains binary gate functions with straight-through estimators and ~\cite{NPPM2021} focuses on training binary gates, by directly maximizing the accuracy of subnetworks.

\textit{PAAM.} To the best of our knowledge, none of the above methods, is able to automatically learn the filter-importance scores from the filter weights, extract hidden correlations among filters, automatically calculate global importance scores for all filters, and determine layer-specific budgets, all at the same time during training, and thus taking advantage of both data-free and data-informed methods.

\section{Conclusion}
We proposed a novel, end-to-end, attention-based, filter-pruning algorithm called PAAM (pruning by active attention manipulation). PAAM learns the filter-importance scores, through gradient descent on the CNN equipped with a
filter-attention network (AN), and a classification loss-function regularized with the $l1$ norm of the filter scores. The AN computes the filter scores from the filter weights and automatically finds the correlations among filters within each layer. The $l1$ regularization, encourages the decrease of scores and finds the filter correlations across layers.
In contrast to a large spectrum of advanced pruning algorithms, PAAM does not necessarily require a  pretrained baseline CNN to prune from. It can rather sparsify dense networks from scratch, through cycles of gradient descent. 

We showed on a comprehensive set of experiments on Cifar10 and ImageNet, that much better compression rates are achievable through the use of PAAM for ResNet50, ResNet56, and ResNet110, while even surpassing baseline
accuracy. PAAM is able to compute the global filter-importance scores and to automatically associate pruning budgets to CNN layers, without layer-specific hyperparameters. In this way, PAAM is taking advantage of both the data-free and the data-informed pruning methods. 

It is our hope that future work is going to start using PAAM in resource-hungry applications domains, such as those of neural-architecture search \cite{mellor2021neural}. The application of PAAM promises to also result in compressed neural networks endowed with salient features, such as accuracy and small compute footprint, automatically distilled during training for resource-constrained environments.

\bibliography{iclr2023_conference}
\bibliographystyle{iclr2023_conference}

\appendix
\section{Appendix}
\subsection{PAAM's Effect on Iterative Inference in ResNets}
In this section, we proceed with an analysis of the effect that PAAM has on the iterative feature-refinement process in residual networks \cite{greff2016highway}. 

ResNets \cite{he2016identity} are known to enhance representation learning in deeper layers via an iterative feature-refinement scheme \cite{greff2016highway}. This scheme suggests that input features to a ResNet do not create new representations. Rather, they gradually and iteratively refine the learned features of the initial residual blocks \cite{jastrzebski2018residual}. Iterative refinement of features was shown to be necessary for obtaining attractive levels of performance, while their disruption hurts performance. 

As AN modifies the underlying model structure and the feature maps of a residual block, it is very important to investigate if PAAM preserves the iterative feature-refinement property of ResNets.

To make this analysis precise, let us first formalize iterative inference as discussed in~\cite{jastrzebski2018residual}: A residual block $i$ in a ResNet with $M$ blocks, performs for the input feature $\textbf{x}_i$ the following transformation: $\textbf{x}_{i+1} = \textbf{x}_i\,{+}\,f_i(\textbf{x}_i)$. Hence, the following loss function $\mathcal{L}$ can be recursively applied to the network \cite{jastrzebski2018residual}:
\begin{equation}
     \mathcal{L}(\textbf{x}_{M}) = \mathcal{L}(\textbf{x}_{M-1} + f_{M-1}(\textbf{x}_{M-1})).
\end{equation}
A first-order Taylor expansion of this loss, while ensuring that $f_{j}$'s magnitude is small, is a good approximation to formally investigating the iterative inference \cite{jastrzebski2018residual}. Thus:
\begin{equation}
    \mathcal{L}(\textbf{x}_{M}) = \mathcal{L}(\textbf{x}_{i}) + \sum^{M-1}_{j=i} f_{j}(\textbf{x}_{j}). \frac{\partial \mathcal{L}(\textbf{x}_{j})}{\textbf{x}_{j}} + \mathcal{O}(f^2_{j}(\textbf{x}_{j})).
    \label{eq:loss_extention}
\end{equation}
This approximation reveals that the $i$-th residual block, modifies features $\textbf{x}_{i}$ with roughly the same amount of $f_{i}(\textbf{x}_{i})$ as that of $\frac{\partial \mathcal{L}(\textbf{x}_{i})}{\textbf{x}_{i}}$. This implies a moderate reduction of loss as we transition from the $i$-th to  the $M$-th block, as an iterative refinement scheme~\cite{greff2016highway,jastrzebski2018residual}. The refinement step for a vanilla residual block can be computed by the squared norm of $f_{i}(\textbf{x}_{i})$, and can be normalized to the input feature as: $\norm{f_{i}(\textbf{x}_{i})}^{2}_{2}{/}\norm{\textbf{x}_{i}}^{2}_{2}$. 

Any modification to the structure of the residual network (e.g., in PAAM) causes a change in the refinement step. This step has to be investigated if it does or it does not modify the iterative inference.

\begin{lemma} 
\label{lem:1}
The iterative feature-refinement scale is bounded for ResNets with PAAM as follows, with parameter $\epsilon$ from the leaky integrator and $0\,{<}\,\delta\,{\leq}\,1\,{+}\,\epsilon$:
\begin{equation}
\label{eq:bounds}
\hspace*{-2ex}\frac{\delta}{1+\epsilon} \frac{\norm{f_{i}(\textbf{x}_{i})}^{2}_{2}}{\norm{ \textbf{x}_{i}}^{2}_{2}}\,{\leq}\,\frac{\norm{S_i\,{\odot}\,f_{i}(\textbf{x}_{i})}^{2}_{2}}{\norm{S_i \odot \textbf{x}_{i}}^{2}_{2}}\,{\leq}\,\frac{1+\epsilon}{\delta} \frac{\norm{f_{i}(\textbf{x}_{i})}^{2}_{2}}{\norm{ \textbf{x}_{i}}^{2}_{2}} 
\end{equation}
\end{lemma}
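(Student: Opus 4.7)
The plan is to observe that in this pruning phase every entry of the score vector $S_i$ is confined to the positive interval $[\delta, 1+\epsilon]$, and then to apply this bound coordinatewise inside each Hadamard product so that the common scale nearly cancels in the ratio of squared norms.

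First, I would isolate the two quantitative facts about $S_i$ that I need. From Equation~(\ref{eq:activationfunc}), the leaky-exponential activation $\phi$ takes values in $(0, 1+\epsilon]$: it rises from $0$ along the exponential branch and is gently bounded above along the leaky-linear branch $1+ax$. Letting $\delta$ denote the smallest score of a surviving filter in layer $i$ (which, under the lemma's hypothesis $0 < \delta \leq 1+\epsilon$, is uniformly positive), every coordinate satisfies $\delta \leq (S_i)_k \leq 1+\epsilon$.

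Second, for any real vector $v$ of the same shape as $S_i$, the Hadamard product has the weighted-norm form $\norm{S_i \odot v}_2^2 = \sum_k (S_i)_k^2\, v_k^2$. The coordinatewise bound above then yields the termwise estimate
\[
\delta^2\, \norm{v}_2^2 \;\leq\; \norm{S_i \odot v}_2^2 \;\leq\; (1+\epsilon)^2\, \norm{v}_2^2.
\]
Applying the upper half with $v = f_i(\textbf{x}_i)$ in the numerator of the middle term of (\ref{eq:bounds}) and the lower half with $v = \textbf{x}_i$ in the denominator immediately gives an upper bound of the shape $\tfrac{(1+\epsilon)^2}{\delta^2}\,\tfrac{\norm{f_i(\textbf{x}_i)}_2^2}{\norm{\textbf{x}_i}_2^2}$; swapping the roles yields the reciprocal lower bound. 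This already shows that PAAM preserves the iterative feature-refinement scale of \cite{greff2016highway,jastrzebski2018residual} up to a multiplicative factor bounded away from $0$ and $\infty$, which is the qualitative conclusion the lemma is used for.

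The main obstacle, as I see it, is the constant-tightening from the squared condition number $(1+\epsilon)^2/\delta^2$ that this naive extremal argument delivers, down to the single power $(1+\epsilon)/\delta$ actually stated in the lemma. Closing this half-order requires exploiting that the \emph{same} score vector $S_i$ multiplies both factors in the quotient
\[
\frac{\norm{S_i \odot f_i(\textbf{x}_i)}_2^2}{\norm{S_i \odot \textbf{x}_i}_2^2} \;=\; \frac{\sum_k (S_i)_k^2\, (f_i(\textbf{x}_i))_k^2}{\sum_k (S_i)_k^2\, (\textbf{x}_i)_k^2},
\]
so that one factor of $\max_k (S_i)_k / \min_k (S_i)_k$ cancels between numerator and denominator. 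I would try a weighted ratio-of-sums inequality that extracts only one power of $(1+\epsilon)/\delta$, or, equivalently, reinterpret $1+\epsilon$ and $\delta$ in the statement as bounds on $(S_i)_k^2$ rather than on $(S_i)_k$ (a convention consistent with $\phi$ being close to unity for retained filters). Everything else in the argument is just termwise positivity and monotonicity of the Hadamard-squared norm.
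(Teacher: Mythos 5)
Your coordinatewise argument is exactly the route the paper takes: its proof also bounds $\|S_i\,{\odot}\,f_i(\textbf{x}_i)\|\,{\leq}\,\|S_i\|_\infty\|f_i(\textbf{x}_i)\|\,{\leq}\,(1+\epsilon)\|f_i(\textbf{x}_i)\|$ and $\|S_i\,{\odot}\,\textbf{x}_i\|\,{\geq}\,\delta\|\textbf{x}_i\|$, i.e., extremal bounds via the largest and smallest entries of $S_i$. But note that those are bounds on the \emph{unsquared} norms; dividing and squaring gives precisely your constant $(1+\epsilon)^2/\delta^2$. The paper then writes the squared-norm ratio against the single power $(1+\epsilon)/\delta$, silently dropping the squares. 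So your ``naive'' derivation is the mathematically correct one, and what you flagged as the main obstacle is in fact an error in the paper's own proof, not a tightening you failed to find.

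The genuine gap in your proposal is the hoped-for closing step: no cancellation of one power of the condition number is available under the stated hypothesis, and the single-power bound is simply false for entrywise bounds $\delta\,{\leq}\,(S_i)_k\,{\leq}\,1+\epsilon$. Concretely, take two coordinates with $f_i(\textbf{x}_i)=(1,0)$, $\textbf{x}_i=(0,1)$, and $S_i=(1+\epsilon,\delta)$: the middle ratio in the lemma equals $(1+\epsilon)^2/\delta^2$ while $\|f_i(\textbf{x}_i)\|_2^2/\|\textbf{x}_i\|_2^2=1$, so the squared constant is attained and exceeds $(1+\epsilon)/\delta$ whenever $\delta<1+\epsilon$; swapping the entries of $S_i$ violates the claimed lower bound the same way. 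The reason the ``same $S_i$ multiplies both factors'' intuition fails is that the weights $(S_i)_k^2$ can concentrate on different coordinates in numerator and denominator, as this example shows, so weighted ratio-of-sums inequalities cannot beat the squared condition number. Your fallback repair is the right one: either restate the lemma with $((1+\epsilon)/\delta)^2$ (the qualitative conclusion --- refinement scale bounded away from $0$ and $\infty$ --- is unchanged, which is all the paper uses), or reinterpret the hypothesis as $\delta\,{\leq}\,(S_i)_k^2\,{\leq}\,1+\epsilon$, under which the single-power form holds verbatim by the termwise estimate $\delta\|v\|_2^2\,{\leq}\,\|S_i\,{\odot}\,v\|_2^2\,{\leq}\,(1+\epsilon)\|v\|_2^2$.
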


\begin{proof} A ResNet block $i$ that is equipped with an SbF-Pruner layer, transforms the features $\textbf{x}_{i}$ with the following expression: $\textbf{x}_{i+1}\,{=}\,S_i\,{\odot}\,(\textbf{x}_i\,{+}\,f_i(\textbf{x}_i)$), where $S_i$ stands for the score vector computed by PAAM. 
The refinement step is given by $\norm{S_i\,{\odot}\,f_{i}(\textbf{x}_{i})}^{2}_{2}$ and its input-normalized representation is $\norm{ S_i\,{\odot}\,f_{i}(\textbf{x}_{i})}^{2}_{2}/\norm{S_i\,{\odot}\, \textbf{x}_{i}}^{2}_{2}$.

\textit{Deriving the upper bound:} Assuming that every element in $S_i$ is between $\delta$ and $1+\epsilon$, for $0\,{<}\,\delta\,{\leq}\,1\,{+}\,\epsilon$, we have:
\begin{equation}
\norm{S_i\,{\odot}\,f_i(\textbf{x}_{i})}\,{\leq}\,\norm{S_i}_\infty \norm{ f_i(\textbf{x}_{i})}\,{\leq}\, (1+\epsilon) \norm{ f_i(\textbf{x}_{i})}
\end{equation}
\vspace*{-3ex}
\begin{equation}
\norm{S_i\,{\odot}\,\textbf{x}_{i}}\,{\geq}\,\norm{S_i}_{min} \norm{\textbf{x}_{i}}\,{\geq}\, \delta \norm{ \textbf{x}_{i}}. \end{equation}

As a consequence, the following upper-bound inequality holds for the iterative inference: 
\begin{equation}
\label{eq:itpU}
\frac{\norm{S_i\,{\odot}\,f_{i}(\textbf{x}_{i})}^{2}_{2}}{\norm{S_i \odot\textbf{x}_{i}}^{2}_{2}} \leq \frac{1+\epsilon}{\delta} \frac{\norm{f_{i}(\textbf{x}_{i})}^{2}_{2}}{\norm{ \textbf{x}_{i}}^{2}_{2}} 
\end{equation}

\textit{Deriving the lower bound:} Assuming that every element in $S_i$ is between $\delta$ and $1+\epsilon$, we have:
\begin{equation}
\norm{S_i\,{\odot}\,f_i(\textbf{x}_{i})}\,{\geq}\,\delta \norm{ f_i(\textbf{x}_{i})}
\end{equation}
\vspace*{-3ex}
\begin{equation}
\norm{S_i\,{\odot}\,\textbf{x}_{i}}\,{\leq}\,(1+\epsilon) \norm{ \textbf{x}_{i}}. 
\end{equation}

As a consequence, the following lower-bound inequality holds for the iterative inference: 
\begin{equation}
\label{eq:itpL}
\frac{\norm{S_i\,{\odot}\,f_{i}(\textbf{x}_{i})}^{2}_{2}}{\norm{S_i \odot\textbf{x}_{i}}^{2}_{2}} \geq \frac{\delta}{1+\epsilon} \frac{\norm{f_{i}(\textbf{x}_{i})}^{2}_{2}}{\norm{ \textbf{x}_{i}}^{2}_{2}} 
\end{equation}

Inequalities~\eqref{eq:itpU} and~\eqref{eq:itpL} prove the the stated lemma.
\end{proof}

Lemma \ref{lem:1} has profound implications in practice. It indicates that the iterative-inference property of the ResNets equipped with PAAM is both lower and upper bounded. These ResNets not only get compressed in size, but also preserve the representation learning capabilities of ResNets between these two bounds. The bounds themselves can be fine tuned with the parameter $\lambda$ of Equation~\eqref{eq:regularized_loss}.

\subsection{Ablation on Activation Functions}

In this section, we provide additional experiments on the choice of the activation function for PAAM layers. We compare the performance of the vanilla PAAM on CIFAR10 with ResNet56 with four different activation functions presented in Figure~\ref{fig:activations}. 

We start with the Sigmoid function. As we already mentioned in the paper, Sigmoid is a favourable function, satisfying the first and third conditions we look for in our activation function. In our experiment with Sigmoid, the scores did not converge to values close to 0 and 1 and mostly stayed at values close to 0.5, making it hard to control the pruning ratio by the loss regularization factor. We further tried the Sigmoid function with higher temperatures, $\sigma(bx)$ with $b = 100$. In this experiment, the scores converged to $0$ and $1$. As discussed in the manuscript, high temperature helps the Sigmoid to converge. But the scores converge at a fast rate to 0 or 1, and hardly change in future epochs, resulting in a uniform random pruning budget allocation scenario. Figure~\ref{fig:sigmoidNet} shows the network discovered by PAAM with high temperature Sigmoid activation function. Compare this to the layer specific pruning results shown in the manuscript in Figure~4(a). Table~\ref{tab:ablation} shows that high temperature Sigmoid activation function results in poor accuracy for the pruned network. In order to be able to use Sigmoid as activation function, extensive per-layer tuning for the temperature is required.

\begin{figure}
\centering
\vspace{-7mm}
    \includegraphics[width=0.5\textwidth]{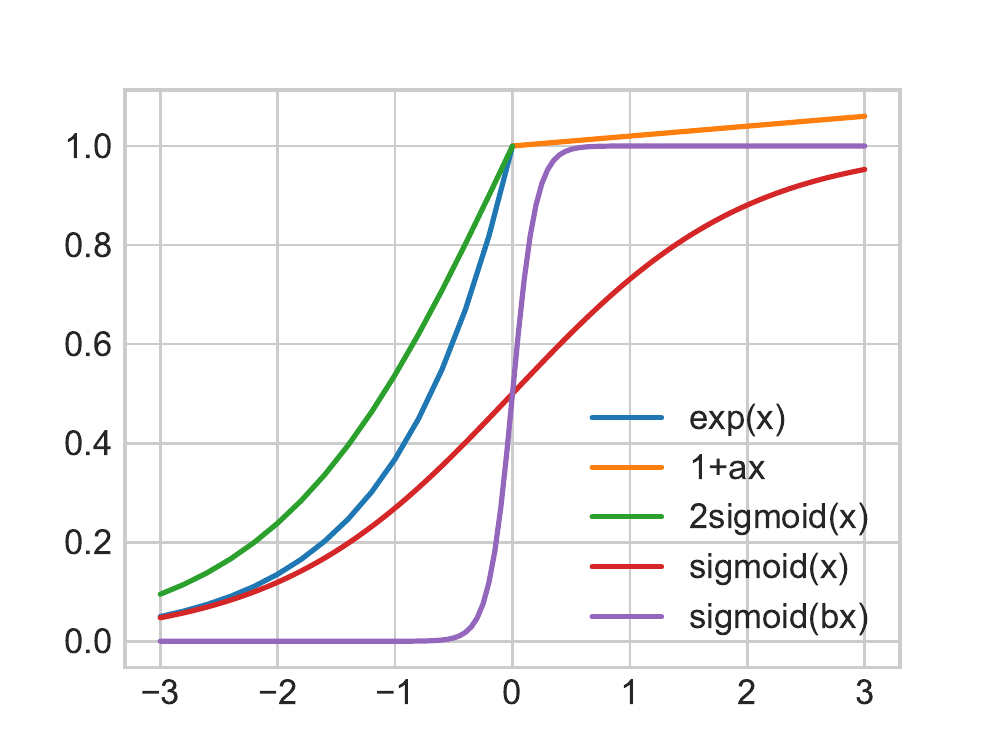}
    \vspace{-5mm}
    \caption{Activation functions.}

    \label{fig:activations}
\end{figure}

\begin{figure}
\centering
   \includegraphics[width=0.6\textwidth]{iclr2023/AppendixImages/PrunedSig.pdf}
   \caption{Network discovered by PAAM with Sigmoid activation function.}
   \label{fig:sigmoidNet}
\end{figure}
\begin{figure}
\centering
   \includegraphics[width=0.6\textwidth]{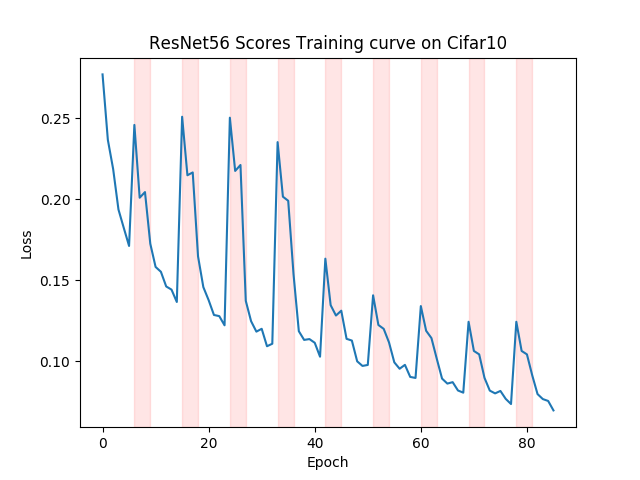}
   \caption{Network discovered by PAAM with Sigmoid activation function.}
   \label{fig:curve}
\end{figure}

To improve the results with Sigmoid, we experimented with an activation function similar to our leaky-expo, which we call leaky-2sigmoid. leaky-2sigmoid is initialized by ones with output range in $[0,2]$ and has an affine linear component for outputs greater than 1 as follows:

\begin{equation}
\label{eq:leakyEq}
\phi(x) = \begin{cases} 2\sigma(x) &\quad\text{if $x<0$} \\ 1.0+ax &\quad\text{if $x\ge0$} \end{cases}
\end{equation}
As one can see in Figure~\ref{fig:activations}, this function is very similar to leaky-expo. The results shown in Table 1 illustrate that PAAM with leaky-2sigmoid and leaky-expo perform competitively, with leaky-expo resulting in better compression rate. This indicates that starting the scores from 1, which is starting from a fully dense network, plays an important role in the results obtained by PAAM. 

\begin{table}[h]
    \centering
    \caption{Ablation study with different activations}
    \begin{tabular}{lccc}
    \toprule
         Activation-Function& Pruned Acc($\%$)&$\downarrow$Parameters($\%$)\\
    \midrule
         Sigmoid(x)&not converged&not converged \\
         Sigmoid(bx)&89.68&51.5 \\
         Leaky-2Sigmoid&94.10&48.2 \\
         Leaky-expo&\textbf{94.28}&\textbf{52.3} \\
         \bottomrule
    \end{tabular}
    \label{tab:ablation}
\end{table}

\subsection{Experimental Details}

In this section, we provide the details of the experiments conducted on Cifar10 and ImageNet datasets. Table~\ref{table:lambdas} shows the training epochs and $\lambda$ value for the scores loss regularizer for our experiments. In all experiments, during pruning epochs, we train the PAAM layers for 3 epochs and then switch to training the network weights for 6 epochs and repeat this set of 9 epochs.

\begin{table}[h]
    \centering
    \caption{Hyperparameters used in Cifar10 experiments.}
    \begin{tabular}{lcccccc}
    \toprule
     \textbf{Model} & Warm-up & Pruning & Fine-tuning & $\lambda$ & $\downarrow$Params($\%$) & $\downarrow$Flops($\%$) \\
     \hline
       &   &   &   &  $5e^{-4}$ & 52.3 & 49.3\\
     ResNet56 & 50 & 90 & 300 &  &  &  \\
              &&&& $1.5e^{-3}$ & 83.0 & 70.5\\
     \hline
      &  & & & $2e^{-4}$ & 54.9 & 51.3 \\ 
     ResNet110 & 50 & 90 & 300 & & &  \\
      &  & & &  $5.5e^{-4}$ & 79.3 & 74.8 \\    
     \hline
     ResNet50 & pre-trained & 54 & 200 & $1e^{-6}$ & 51.1 & 23.1\\
     \bottomrule
    \end{tabular}

    \label{table:lambdas}
\end{table}

\textbf{Cifar10.} We used Vanilla and KQ PAAM for Cifar10 datasets. We didn't use fully pre-trained networks, but rather did 50 warm-up epochs before pruning. Figure~\ref{fig:curve} shows the training loss curve while performing the alternating training for the scores and the network weights for ResNet56.

\textbf{ImageNet.} We used KQ-PAAM for ImageNet experiments. We set the hidden dimension $d_l$ of each layer to number of filters of that layer, devided by the layer expansion parameter, which is $4$ in ResNet50. We normalised the scaling factor by $1/\alpha$, set to the sum-of-squares root of the number of filters in the last layers of each block. We used the pre-trained pytorch model for ResNet50. For pruning and fine-tuning we augment the data by a random-resized crop, random-horizontal flip, and trivial augmentation. We also normalized the data samples. For pruning epochs, we use Adam, and set the learning rates to $10^-6$ and $10^-3$ for score training and network training phases, respectively.

\end{document}